\newtheorem{definition}{Definition}
\newtheorem{theorem}[definition]{Theorem}
\newtheorem{remark}[definition]{Remark}
\title{ACDC: A Structured Efficient Linear Layer}
\author{
Marcin Moczulski$^1$ \quad Misha Denil$^{1}$ \quad Jeremy Appleyard$^2$ \quad Nando de Freitas$^{1,3}$\\[0.5cm]
$^1$University of Oxford  \quad $^2$NVIDIA \quad $^3$CIFAR\\
\texttt{marcin.moczulski@stcatz.ox.ac.uk}\\
\texttt{misha.denil@gmail.com}\\
\texttt{jappleyard@nvidia.com}\\
\texttt{nando.de.freitas@cs.ox.ac.uk}
}
\newcommand{\pderiv}[2]{\frac{\partial {#1}}{\partial {#2}}}
\newcommand{\C}{\mathbb{C}}
\newcommand{\myvec}[1]{\mathbf{#1}}
\newcommand{\myvecsym}[1]{\boldsymbol{#1}}
\newcommand{\vPhi}{\myvecsym{\Phi}}
\newcommand{\va}{\myvec{a}}
\newcommand{\vd}{\myvec{d}}
\newcommand{\vh}{\myvec{h}}
\newcommand{\vo}{\myvec{o}}
\newcommand{\vx}{\myvec{x}}
\newcommand{\vy}{\myvec{y}}
\newcommand{\vA}{\myvec{A}}
\newcommand{\vB}{\myvec{B}}
\newcommand{\vC}{\myvec{C}}
\newcommand{\vD}{\myvec{D}}
\newcommand{\vF}{\myvec{F}}
\newcommand{\vH}{\myvec{H}}
\newcommand{\vI}{\myvec{I}}
\newcommand{\vM}{\myvec{M}}
\newcommand{\vP}{\myvec{P}}
\newcommand{\vR}{\myvec{R}}
\newcommand{\vS}{\myvec{S}}
\newcommand{\vU}{\myvec{U}}
\newcommand{\vW}{\myvec{W}}
\newcommand{\vX}{\myvec{X}}
\newcommand{\vY}{\myvec{Y}}
\newcommand{\diag}{\mathrm{diag}}
\newcommand{\be}{\begin{equation}}
\newcommand{\ee}{\end{equation}}
\newcommand{\bea}{\begin{eqnarray}}
\newcommand{\eea}{\end{eqnarray}}
\newcommand{\beaa}{\begin{eqnarray*}}
\newcommand{\eeaa}{\end{eqnarray*}}
\DeclareMathAlphabet{\mathpzc}{OT1}{pzc}{m}{n}
\begin{document}

\maketitle

\begin{abstract}
The linear layer is one of the most pervasive modules in deep learning representations. However, it requires $O(N^2)$ parameters and $O(N^2)$ operations. These costs can be prohibitive in mobile applications or prevent scaling in many domains. Here, we introduce a deep, differentiable, fully-connected neural network module composed of 
diagonal matrices of parameters, $\vA$ and $\vD$, and the discrete cosine transform $\vC$. The core module, structured as $\vA\vC\vD\vC^{-1}$, has $O(N)$ parameters and incurs $O(N\log N)$ operations. We present theoretical results showing how deep cascades of $\operatorname{ACDC}$ layers approximate linear layers. $\operatorname{ACDC}$  is, however, a stand-alone module and can be used in combination with any other types of module. In our experiments, we show that it can indeed be successfully interleaved with ReLU modules in convolutional neural networks for image recognition. 
Our experiments also study critical factors in the training of these structured modules, including initialization and depth. 
Finally, this paper also points out avenues for implementing the complex version of $\operatorname{ACDC}$ using photonic devices.   
\end{abstract}

\section{Introduction}
\label{sec:introduction}

{\let\thefootnote\relax\footnotetext{Torch implementation of ACDC is available at \url{https://github.com/mdenil/acdc-torch}}}

The linear layer is the central building block of nearly all modern neural network models.  A notable exception to this is the convolutional layer, which has been extremely successful in computer vision; however, even convolutional networks typically feed into one or more linear layers after processing by convolutions.  Other specialized network modules including LSTMs \citep{hochreiter1997long}, GRUs \citep{cho2014learning}, the attentional mechanisms used for image captioning~\citep{xu2015} and machine translation \citep{bahdanau2015neural}, reading in Memory Networks \citep{sukhbaatar2015}, and both reading and writing in Neural Turing Machines \citep{graves2015}, are all built from compositions of linear layers and nonlinear modules, such as sigmoid, softmax and ReLU layers.

The linear layer is essentially a matrix-vector operation, where the input $\vx$ is scaled with a matrix of parameters $\vW$ as follows:
\be
\vy = \vx \vW 
\ee
When the number of inputs and outputs is $N$, the number of parameters stored in $\vW$ is $O(N^2)$. It also takes $O(N^2)$ operations to compute the output $\vy$.

In spite of the ubiquity and convenience of linear layers, their $O(N^2)$ size is extremely wasteful. Indeed, several studies focusing on feedforward perceptrons and convolutional networks have shown that the parametrisation of linear layers is extremely wasteful, with up to 95\% of the parameters being redundant~\citep{DenilSDRF13,gong2014compressing,SainathKSAR13}.

Given the importance of this research topic, we have witnessed a recent explosion of works introducing structured efficient linear layers (SELLs).  We adopt the following notation to describe SELLs within a common framework:
\be
\vy = \vx \vPhi = \vx \vPhi(\vD,\vP,\vS,\vB)
\ee
We reserve the capital bold symbol $\vD$ for diagonal matrices, $\vP$ for permutations, $\vS$ for sparse matrices, and $\vB \in \{\vF,\vH,\vC\}$ for bases such as Fourier, Hadamard and Cosine transforms respectively. In this setup, the parameters are typically in the diagonal or sparse entries of the matrices $\vD$ and $\vS$. Sparse matrices aside, the computational cost of most SELLs is $O(N\log N)$, while the number of parameters is reduced from $O(N^2)$ to a mere $O(N)$. These costs are a consequence of the facts that we only need to store the diagonal matrices, and that the Fourier, Hadamard or Discrete Cosine transforms can be efficiently computed in $O(N\log N)$ steps.  

Often the diagonal and sparse matrices have fixed random entries. When this is the case, we will use tildes to indicate this fact (\emph{e.g.}, $\tilde{\vD}$).

Our first SELL example is the Fast Random Projections method of \citet{Ailon2009}: 
\be
\vPhi =  \tilde{\vD} \vH \tilde{\vS}
\ee
Here, the sparse matrix $\tilde{\vS}$ has Gaussian entries, the diagonal $\tilde{\vD}$ has $\{+1,-1\}$ entries drawn independently with probability $1/2$, and $\vH$ is the Hadamard matrix.
The embeddings generated by this SELL preserve metric information with high probability, as formalized by the theory of random projections. 

Fastfood \citep{le:2013}, our second SELL example, extends fast random projections as follows:
\be
\vPhi = \tilde{\vD}_1 \vH \vP \tilde{\vD}_2 \vH \tilde{\vD}_3.
\ee
In \citep{Yang2014}, the authors introduce an adaptive variant of Fastfood, with the random diagonal matrices replaced by diagonal matrices of parameters, and show that it outperforms the random counterpart when applied to the problem of replacing one of the fully connected layers of a convolutional neural network for ImageNet \citep{jia2014caffe}. Interestingly, while the random variant is competitive in simple applications (MNIST), the adaptive variant has a considerable advantage in more demanding applications (ImageNet). 

The adaptive SELLs, including Adaptive Fastfood and the alternatives discussed subsequently, are end to end differentiable. They require only $O(N)$ parameters and $O(N\log N)$ operations in both the forward and backward passes of backpropagation. These benefits can be achieved both at train and test time.

\citet{cheng2015exploration} introduced a SELL consisting of the product of a circulant matrix ($\vR$) and a random diagonal matrix ($\tilde{\vD}_1$). Since circulant matrices can be diagonalized with the discrete Fourier transform \citep{Golub:1996}, this SELL falls within our general notation: 
\be
\vPhi =  \tilde{\vD}_1 \vR  = \tilde{\vD}_1 \vF \vD_2 \vF^{-1}. 
\ee
\citet{sindhwani2015structured} introduced a Toeplitz-like structured transform, within the framework of displacement operators. Since Toeplitz matrices can be ``embedded'' in circulant matrices, they can also be diagonalized with the discrete Fourier transform \citep{Golub:1996}.

In this work, we introduce a SELL that could be thought of as an adaptive variant of the method of \citet{cheng2015exploration}. In addition, instead of using a (single) shallow SELL as in previous works \citep{Yang2014,cheng2015exploration,sindhwani2015structured}, we consider deep SELLs:
\be
\vPhi =  \prod_{k=1}^{K} \vA_k \vF \vD_k \vF^{-1}. 
\label{eq:deepsell}
\ee
Here, $\vA$ is also a diagonal matrix of parameters, but we use a different symbol to emphasize that $\vA$ scales the signal in the original domain while $\vD$ scales it in the Fourier domain.

While adaptive SELLs perform better than their random counterparts in practice, there is a lack of theory for adaptive SELLs. Moreover, the empirical studies of recent adaptive SELLs have many deficiencies. For instance, it is often not clear how performance varies depending on implementation, and many critical details such as initialization and the treatment of biases are typically obviated. In addition, the gains are often demonstrated in models of different size, making objective comparison very difficult.

In addition to demonstrating good performance replacing the fully connected layers of CaffeNet, we present a theoretical approximation guarantee for our deep SELL in Section~\ref{sec:deepsell}.
We also discuss the crucial issue of implementing deep SELLs efficiently in modern GPU architectures in Section~\ref{sec:gpu} and release this software with this paper. This engineering contribution is important as many of the recently proposed methods for accelerating linear layers often fail to take into account the attributes and limitations of GPUs, and hence fail to be adopted.

\subsection{Lightning fast deep SELL}

\begin{figure}
\centering
\includegraphics[width=0.75\linewidth]{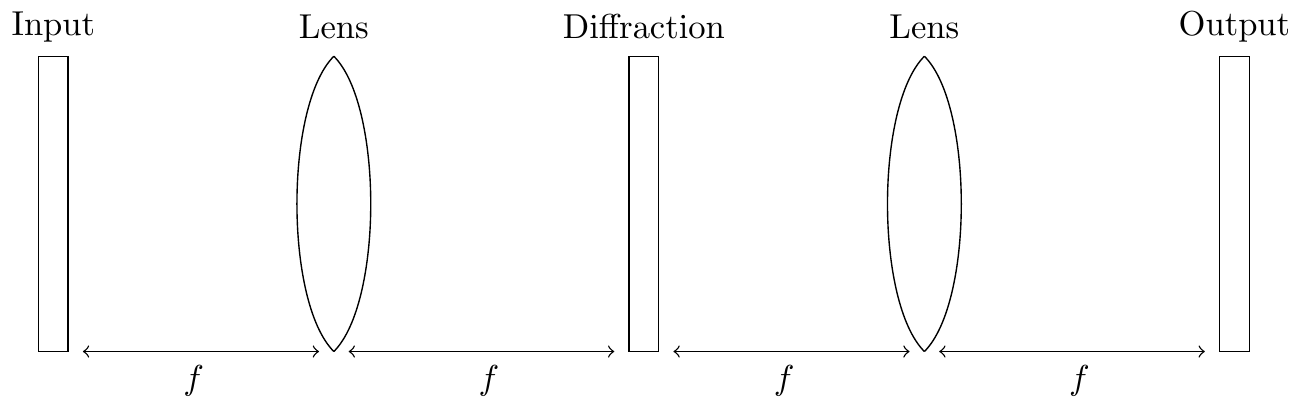}
\caption{Example of a $4f$ system.  This system implements the multiplication of an optical signal by a circulant matrix $\vF\vD\vF^{-1}$.  The lenses apply Fourier transforms to the signal and the diffraction element applies the diagonal multiplication.}
\label{fig:fourf}
\end{figure}

Our deep SELL (equation~(\ref{eq:deepsell})) offers several possibilities for analog physical implementation. Given the great demand for fast low energy neural networks, the possibility of harnessing physical phenomena to perform efficient computation in deep networks is worthy of consideration.    

In the Fourier optics field, it is well known that the two-dimensional Fourier transform can be implemented with a paraxial optical system consiting of a lens of focal length $f$ in free space. In this setup, known as a $2f$ system, a waveform in the frontal focal plane of the lens, viewed as a two-dimensional complex array, is transformed to another one in the focal plane behind the lens that corresponds to the Fourier transform of the array. A $4f$ system is obtained by placing a diffractive element in between two $2f$ systems at a distance $f$ from each (shown in Figure~\ref{fig:fourf}).

Every circulant matrix $\vR = \vF \vD \vF^{-1}$ can be realized optically using a $4f$ system, with the transformation by the diffractive optical device corresponding to the multiplication by the complex diagonal matrix $\vD$ 
 \citep{reif97,muller1998algorithmic,Huhtanen08,Schmid:2000}. Moreover, paraxial diffractive optical systems with consecutive products of circulant and diagonal matrices can factor a complex matrix into products of diagonal and circulant matrices \citep{muller1998algorithmic,Huhtanen15}. Hence, in principle the mapping of equation~(\ref{eq:deepsell}) can be implemented with optical elements. 
 
In a separate research community, \citet{Hermans:2015} recently discussed using waves in a trainable medium for learning linear layers by backpropagation, and suggested a potential implementation using an integrated photonics chip. The nanophotonic chip consists of a cascade of unitary trasformations of the optical signals interleaved with tuneable waveguides (phase shifters). \citet{Hermans:2015} present an abstraction of this chip. In particular, if we let $\vo$ and $\vo'$ represent the optical fields at the input and output waveguides, the chip implements the following transformation:
\be
\vo' =  \prod_{k=1}^{K} \vD_k \vU_k \vo 
\label{eq:nanophotonic}
\ee
where $\vU_k$ is a unitary transformation of the signal and $\vD_k$ is a diagonal matrix $\vD_k = \diag(\exp(j\boldsymbol{\varphi}_k))$ with tuneable phase shifts $\boldsymbol{\varphi}_k$. 
By restricting the diagonal matrices in equation~(\ref{eq:deepsell}) to be of this complex form, the circulant $\vR = \vF \vD \vF^{-1}$ is unitary and we obtain an equivalence between equations~(\ref{eq:deepsell}) and (\ref{eq:nanophotonic}). This points to a potential nanophotonic implementation of our complex deep SELL. 

More recently, \citet{Saade:2015} disclosed an invention that peforms optical analog random projections.

\section{Further related works}
\label{sec:related}

The literature on this topic is vast, and consequently this section only aims to capture some of the significant trends. We refer readers to the related work sections of the papers cited in the previous and present section for further details.

As mentioned earlier, many studies have shown that the parametrisation of linear layers is extremely wasteful \citep{DenilSDRF13,gong2014compressing,SainathKSAR13}.  In spite of this redundancy, there has been little success in improving the linear layer, since natural extensions, such as low rank factorizations, lead to poor performance when trained end to end. For instance, \citet{SainathKSAR13} demonstrate significant improvements in reducing the number of parameters of the output softmax layers, but only modest improvements for the hidden linear layers.  

Several methods based on low-rank decomposition and sparseness have been proposed to eliminate parameter redundancy at test time, but they provide only a partial solution as the full network must be instantiated during training~\citep{Collins2014,XueLG13,blundell-uncertainty-2015,Liu_2015_CVPR,Han2015}. That is, these approaches require training the original full model. Hashing techniques have been proposed to reduce the number of parameters \citep{Chen2015,Bakhtiary15}. Hashes have irregular memory access patterns and, consequently, good performance on large GPU-based platforms is an open problem. Distillation \citep{Hinton15,Romero2015} also offers a way of compressing neural networks, as a post-processing step.

\cite{novikov2015tensorizing} use a multi-linear transform (Tensor-Train decomposition) to attain significant reductions in the number of parameters in some of the linear layers of convolutional networks.

\section{Deep SELL}
\label{sec:deepsell}

We define a single component of deep SELL as
$\operatorname{AFDF}(\vx) = \vx \vA\vF\vD\vF^{-1}$, where $\vF$ is the Fourier transform  and $\vA, \vD$ are complex diagonal matrices. It is straightforward to see that the AFDF transform is not sufficient to express an arbitrary linear operator $\vW \in \C^{n\times n}$.  An AFDF transform has $2n$ degrees of freedom, whereas an arbitrary linear operator has $n^2$ degrees of freedom.

To this end, we turn our attention to studying compositions of AFDF transforms.  By composing AFDF transforms we can boost the number of degrees of freedom, and we might expect that any linear operator could be constructed as a composition of sufficiently many AFDF transforms.  In the following we show that this is indeed possible, and that a bounded number of AFDF transforms is sufficient.

\begin{definition}
  The order-$K$ AFDF transformation is the composition of $K$ consecutive AFDF operations with (optionally) different $\vA$ and $\vD$ matrices.  We write an order-$K$ complex AFDF transformation as follows
  \begin{align}
  \label{eq:afdfk}
    \vy = \operatorname{AFDF}_K(\vx) = \vx \left[\prod_{k=1}^K \vA_k\vF\vD_k\vF^{-1} \right]. 
  \end{align}
  We also assume, without loss of generality, that $\vA_1 = \vI$ so that $\operatorname{AFDF}_1(\vx) = \vx \vF\vD_1\vF^{-1}$. 
\end{definition}

For the analysis it will be convenient to rewrite the AFDF transformation in a different way, which we refer to as the \emph{optical presentation}.

\begin{definition}
  If $\vy = \operatorname{AFDF}_K(\vx)$ then we define the optical presentation of an order-$K$ AFDF transform as
  \begin{align*}
    \hat{\vy} = \hat{\vx} \left[ \prod_{k=1}^{K-1} \vD_k \vR_{k+1} \right ] \vD_K
  \end{align*}
  where $\hat{\vx}$ and $\hat{\vy}$ are the Fourier transforms of $\vx$ and $\vy$, and $\vR_{k+1} = \vF^{-1}\vA_{k+1}\vF$.
\end{definition}

\begin{remark}
  The matrix $\vR = \vF^{-1}\vA\vF$ is circulant.  This follows from the duality between convolution in the spatial domain and pointwise multiplication in the Fourier domain.
\end{remark}

The optical presentation shows how the spectrum of $\vx$ is related to the spectrum of $\vy$.  Importantly, it shows that we can express an order-$K$ AFDF transform as a linear operator in Fourier space that is composed of a product of circulant and diagonal matrices.  Transformations of this type are well studied in the Fourier optics literature, as they can be realized with cascades of lenses.

Of particular relevance to us is the main result of \citet{Huhtanen15} which states that almost all (in the Lebesgue sense) matrices $\vM \in \C^{N\times N}$ can be factored as
\begin{align*}
  \vM &= \left[\prod_{i=1}^{N-1} \vD_{2i-1} \vR_{2i}\right] \vD_{2N-1}
\end{align*}
where $\vD_{2j-1}$ is diagonal and $\vR_{2j}$ is circulant.  This factorization corresponds exactly to the optical presentation of an order-$N$ AFDF transform, therefore we conclude the following:
\begin{theorem}
  An order-$N$ AFDF transform is sufficient to approximate any linear operator in $\C^{N\times N}$ to arbitrary precision.
\end{theorem}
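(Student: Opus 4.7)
The plan is to reduce the theorem directly to the Huhtanen factorization already cited, using the optical presentation as the bridge between the two formulations.

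First, I would take an order-$N$ AFDF transform (i.e., set $K = N$) and write out its optical presentation. Because the convention $\vA_1 = \vI$ forces $\vR_1 = \vF^{-1}\vA_1\vF = \vI$ to drop out of the product, the optical matrix takes the shape $\vD_1 \vR_2 \vD_2 \vR_3 \cdots \vD_{N-1} \vR_N \vD_N$: an alternating product of $N$ diagonal and $N-1$ circulant factors, beginning and ending with a diagonal. A straightforward re-indexing ($\vD_k \mapsto \vD_{2k-1}$, $\vR_{k+1} \mapsto \vR_{2k}$) aligns this exactly with Huhtanen's normal form $\left[\prod_{i=1}^{N-1} \vD_{2i-1} \vR_{2i}\right] \vD_{2N-1}$, so the set of optical matrices realizable by an order-$N$ AFDF transform coincides with the set of matrices admitting Huhtanen's factorization.

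Second, I would translate this back to the spatial AFDF matrix. If $\vW_{\operatorname{AFDF}} = \prod_{k=1}^{N} \vA_k \vF \vD_k \vF^{-1}$ is the matrix realized by an order-$N$ AFDF transform, a direct computation gives $\vF^{-1} \vW_{\operatorname{AFDF}} \vF$ equal to the corresponding optical matrix. Since $\vF$ is unitary under the standard normalization, conjugation by $\vF$ is a linear isometry on $\C^{N\times N}$ in any unitarily invariant norm, so approximation quality is preserved under this change of variables.

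Third, I would upgrade Huhtanen's ``almost all (in the Lebesgue sense)'' conclusion to a density statement: a full-measure subset of $\C^{N\times N}$ has complement of empty interior and is therefore dense. Combined with the previous two steps, the set of matrices realizable as order-$N$ AFDF transforms is dense in $\C^{N\times N}$, which is exactly the ``approximation to arbitrary precision'' claim. Concretely, for any target $\vW \in \C^{N\times N}$ and any $\varepsilon > 0$, one can choose diagonal matrices $\vA_2, \ldots, \vA_N, \vD_1, \ldots, \vD_N$ so that $\|\vW - \vW_{\operatorname{AFDF}}\| < \varepsilon$.

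The main obstacle is more expository than mathematical: Huhtanen's theorem does the real work. The one point that must be stated carefully is the passage from ``full Lebesgue measure'' to ``dense,'' which converts Huhtanen's existence-a.e.\ factorization into an approximation-for-every-matrix statement, together with the observation that the unitary conjugation by $\vF$ preserves the approximation error so that density transfers cleanly from the optical side to the AFDF side. The remaining steps are bookkeeping: matching the indices of the two factorizations, and verifying that fixing $\vA_1 = \vI$ is exactly what removes the ``extra'' leading circulant and produces the $2N-1$ alternating factors appearing in Huhtanen's form.
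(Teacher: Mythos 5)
Your proposal is correct and follows the same route as the paper, which simply invokes the optical presentation and cites the density result of Huhtanen's factorization into alternating diagonal and circulant factors. You fill in details the paper leaves implicit---the index matching, the unitary conjugation by $\vF$ preserving approximation error, and the passage from ``full Lebesgue measure'' to ``dense''---but the argument is the same one.
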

\begin{proof}
  Every AFDF transform has an optical presentation, and by the main result of \citet{Huhtanen15} operators of this type are dense in $\C^{N\times N}$.
\end{proof}


\section{ACDC: A practical deep SELL}
\label{sec:acdc}

Thus far we have focused on a complex SELL, where theoretical guarantees can be obtained.  In practice we find it useful to consider instead a real SELL.  The real version of $\operatorname{AFDF}_K$, denoted $\operatorname{ACDC}_K$ has the same form as Equation~(\ref{eq:afdfk}), with complex diagonals replaced with real diagonals, and Fourier transforms replaced with Cosine Transforms.  This change departs from the theory of Section~\ref{sec:deepsell}; however, our experiments show that this does not appear to be a problem in practice.

The reasons for considering ACDC over AFDF are purely practical.  
\begin{enumerate}
\item Most existing deep learning frameworks support only real numbers, and thus working with real valued transformations simplifies the interface between our SELL and the rest of the network.
\item Working with complex numbers effectively doubles the memory footprint of of the transform itself, and more importantly, of the activations that interact with it.
\end{enumerate}
The importance of the second point should not be underestimated, since the computational complexity of our SELL is quite low, a typical GPU implementation will be bottlenecked by the overhead of moving data through the GPU memory hierarchy.  Reducing the amount of data to be moved allows for a significantly faster implementation.  We discuss these concerns in more detail in Section~\ref{sec:gpu}.





In this work, we use the DCT (type II) matrix with entries
\be
c_{nk} = \sqrt{\frac{2}{N}}
\left[ \epsilon_k \cos \left(
\frac{\pi(2n+1)k}{2N}
\right)
\right]
\ee
for $n,k = 0,1,\ldots,N$, and where $\epsilon_k = 1/\sqrt{2}$ for $k=0$ or $k=N$ and $\epsilon_k = 1$ otherwise. DCT matrices are real and orthogonal: $\vC^{-1}=\vC^T$. Moreover, the DCTs are separable transforms. That is, the DCT of a multi-dimensional signal can be decomposed in terms of successive DCTs of the appropriate one-dimensional components of the signal. The DCT can be computed efficiently using the Fast Fourier Transform (FFT) algorithm (or the specialized fast cosine transform).



Denoting $\vh_1 = \vx_i \vA$, $\vh_2 = \vh_1 \vC$, $\vh_3 = \vh_2 \vD$, $\vy_i = \vh_3 \vC^{-1}$, 
and $\vA = \operatorname{diag}(\va), \vD = \operatorname{diag}(\vd)$
we have the following derivatives in the backward pass:

\bea
\pderiv{L}{\vd}
= \pderiv{\vy_i}{\vd} \pderiv{L}{\vy_i}
= \pderiv{ \vh_2 \vD  }{\vd} \pderiv{ \vh_3 \vC^{-1}}{\vh_3} \pderiv{L}{\vy_i}
= \operatorname{diag}(\vh_2) \vC \pderiv{L}{\vy_i}
= \vh_2 \odot \vC \pderiv{L}{\vy_i} \\
\pderiv{L}{\vD} = \operatorname{diag}(\pderiv{L}{\vd}) \\
\pderiv{L}{\va}
= \pderiv{\vy_i}{\va} \pderiv{L}{\vy_i}
= \pderiv{ \vx_i \vA  }{\va} \pderiv{ \vh_1 \vC  }{\vh_1} \pderiv{ \vh_2 \vD }{\vh_2} \pderiv{ L }{\vh_3} 
= \vx_i \odot \vC^{-1} \vd \odot \vC \pderiv{L}{\vy_i} \\
\pderiv{L}{\vA} = \operatorname{diag}(\pderiv{L}{\va}) \\
\pderiv{L}{\vx_i}
= \pderiv{\vy_i}{\vx_i} \pderiv{L}{\vy_i}
= \pderiv{ \vx_i \vA }{\vx_i} \pderiv{ L  }{\vh_1} 
= \va \odot \vC^{-1} \vd \odot \vC \pderiv{L}{\vy_i}
\eea

\section{Efficient implementation of ACDC}
\label{sec:gpu}

The processor used to benchmark the ACDC layer was an NVIDIA Titan X. The peak floating point throughput of the Titan X is 6605 GFLOPs, and the peak memory bandwidth is 336.5GB/s\footnote{\url{http://www.geforce.co.uk/hardware/desktop-gpus/geforce-gtx-titan-x/specifications}}. This gives an arithmetic intensity (FLOPs per byte) of approximately 20. In the ideal case, where there is enough parallelism for the GPU to hide all latencies, an algorithm with a higher arithmetic intensity than this would be expected to be floating point throughput bound, while an algorithm with lower arithmetic intensity would be expected to be memory throughput bound.

The forward pass of a single example through a size-$N$ ACDC layer when calculated using 32-bit floating point arithmetic requires at least $24N$ bytes to be moved to and from main memory. Eight bytes per element for each of $\vA$ and $\vD$, four bytes per element for the input, and four bytes per element for the output. It also requires approximately $4N + 5 N\log_2(N)$ floating point operations\footnote{\url{http://www.fftw.org/speed/method.html}}. When batching, the memory transfers for $\vA$ and $\vD$ are expected to be cached as they are reused for each example in the batch, so for the purposes of calculating arithmetic intensity in the batched case it is reasonable to discount them. The arithmetic intensity of a minibatch passing through an ACDC layer is therefore approximately:

\begin{align*}
AI = (4 + 5 \log_2(N)) / 8
\end{align*}

For the values of $N$ we are interested in ($128 \longrightarrow 16,384$) this arithmetic intensity varies between 4.9 and 9.3, indicating that the peak performance of a large ACDC layer with a large batch size is expected to be limited by the peak memory throughput of the GPU (336.5GB/s), and that optimization of an ACDC implementation should concentrate on removing any extraneous memory operations. 
 
Two versions of ACDC have been implemented. One performs the ACDC in a single call, with the minimum of $8N$ bytes moved per layer (assuming perfect caching of $\vA$ and $\vD$). The other performs ACDC with multiple calls, with significantly more than $8N$ bytes moved per layer.

\begin{figure}
\includegraphics[width=0.47\linewidth]{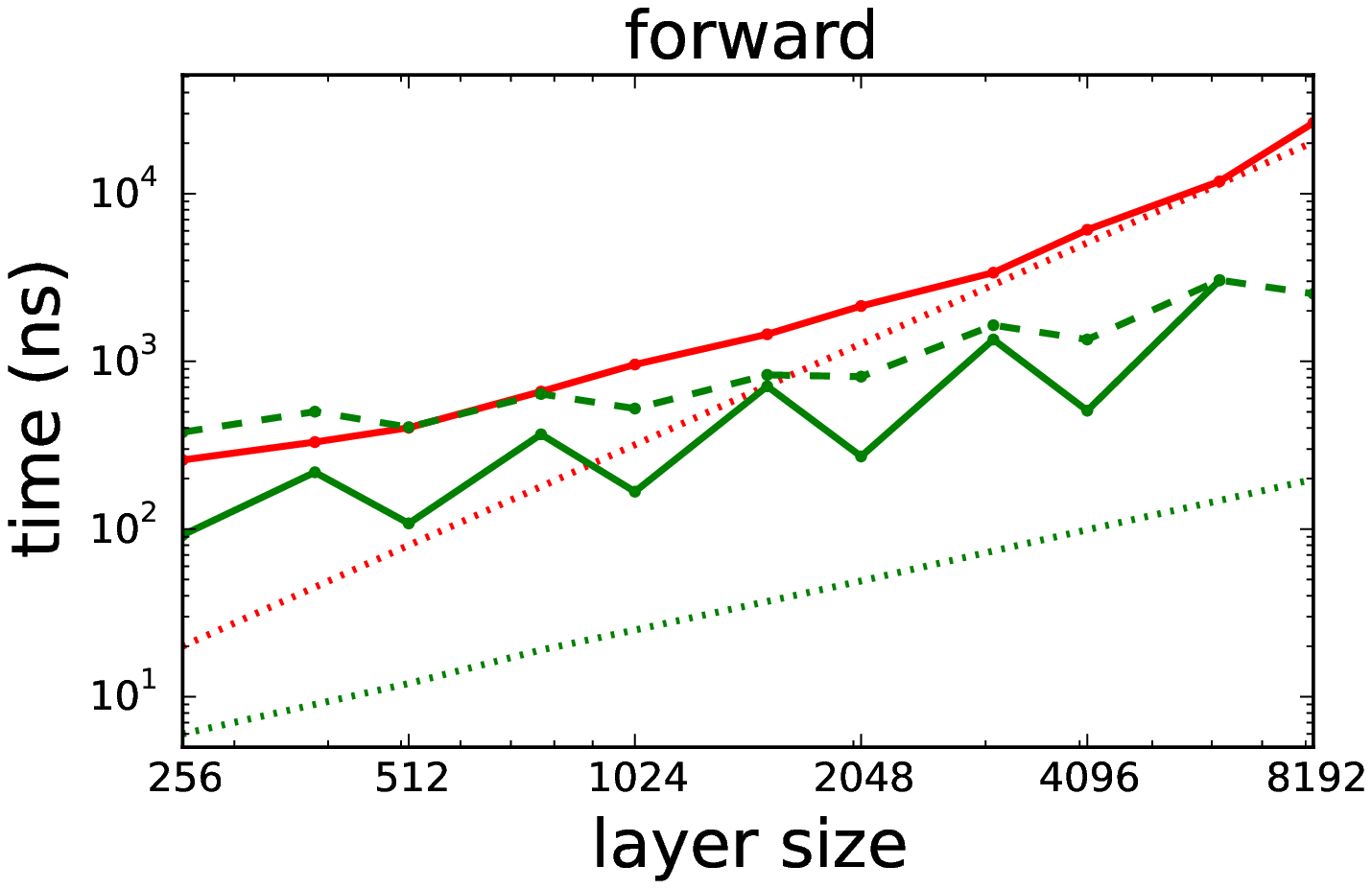}
\includegraphics[width=0.47\linewidth]{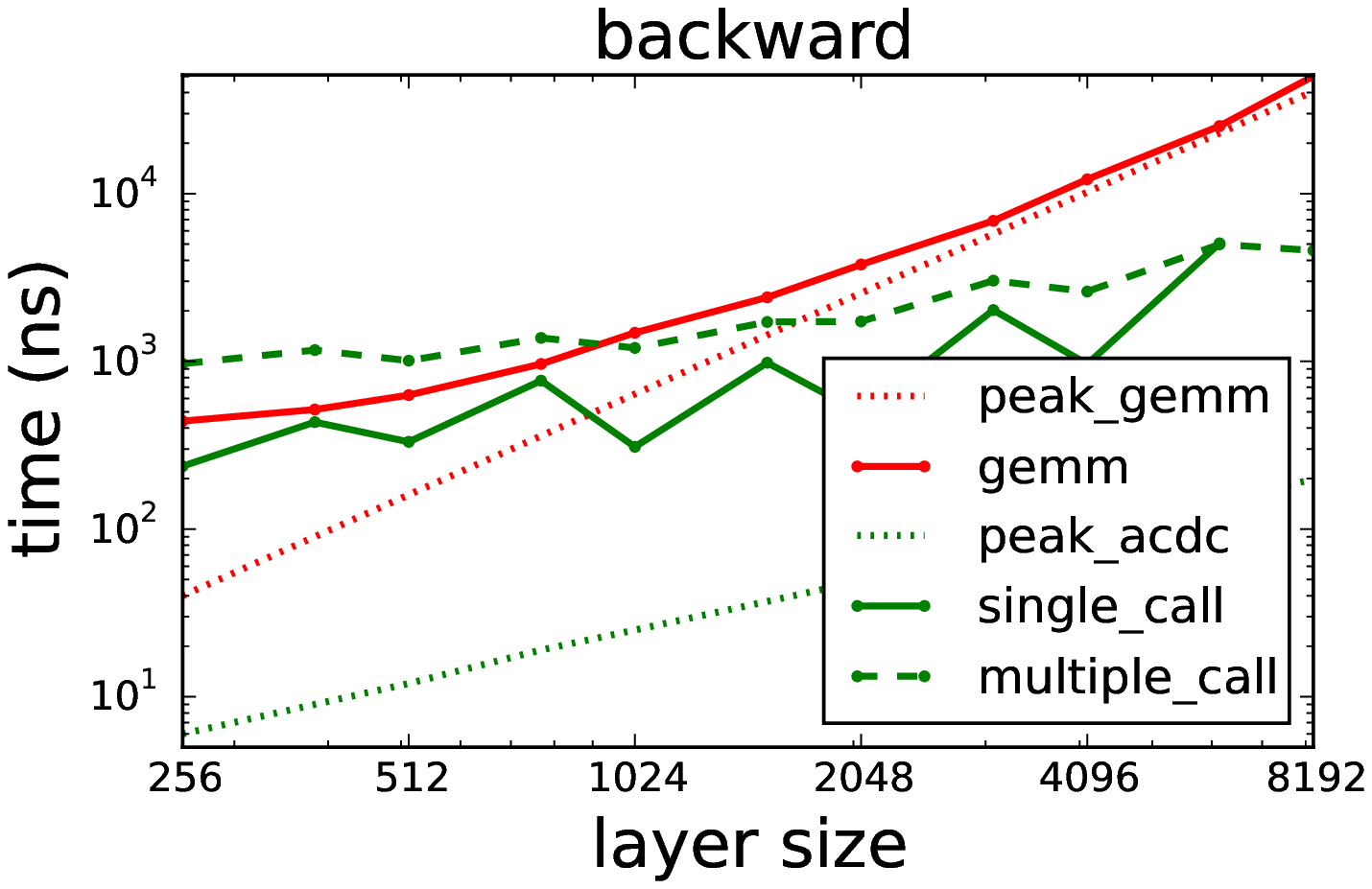}
\caption{Performance comparison of theoretical and actual performance of our ACDC implementations to an ordinary dense linear layer using a batch size of 128.  Peak curves show maximum theoretical performance achievable by the hardware.}
\label{fig:acdc_speed}
\end{figure}
 
\subsection{Single call implementation}
 
To minimize traffic to and from main memory intermediate loads or stores during the layer must be eliminated. To accomplish this kernel fusion is used to fuse all of the operations of ACDC into a single call, with intermediate values being stored in temporary low-level memory instead of main memory. This presents two challenges to the implementation.

Firstly, the size of the ACDC layer is limited by the availability of temporary memory on the GPU. This limits the size of the ACDC layer that can be calculated. It also has performance implications: the temporary memory used to store intermediate values in the computation is shared with the registers required for basic calculation, such as loop indices. The more of this space that is used by data, the fewer threads can fit on the GPU at once, limiting parallelism.
 
Secondly, the DCT and IDCT layers must be written by hand so that they can be efficiently fused with the linear layers. Implementations of DCT and IDCT are non-trivial, and a generic implementation able to handle any input size would be a large project in itself. For this reason, the implementation is constrained to power-of-two and multiples of large power-of-two layer sizes.

\subsection{Multiple call implementation}

While expected to be less efficient a multiple call implementation is both much simpler programmatically, and much more generically usable. Using the method of \citet{makhoul1980fast} it is possible to perform size-$N$ DCTs and IDCTs using size-$N$ FFTs. As such, the NVIDIA library cuFFT can be used to greatly simplify the code required, as well as achieve reasonable performance across a wide range of ACDC sizes. The procedure is as follows:
\begin{enumerate}
\item Multiply input by $\vA$ and set up $\vC_1$
\item Perform $\vC_1$ using a C2C cuFFT call 
\item Finalize $\vC_1$, multiply by $\vD$ and setup $\vC_2$
\item Perform $\vC_2$ using a C2C cuFFT call
\item Finalize $\vC_2$
\end{enumerate}
 
The total memory moved for this implementation is significantly higher as each call requires a load and a store for each element. The performance trade-off with the single call method is therefore one of parallelism against memory traffic.

\subsection{Performance Comparison}

Figure~\ref{fig:acdc_speed} compares the speed of the single and multiple call implementations of ACDC against dense matrix-matrix multiplication for a variety of layer sizes.

It is clear that in both the forward and backward pass ACDC layers have a significantly lower runtime than fully connected layers using dense matrices. Even if the matrix-matrix operations were running at peak, ACDC still would outperform them by up to 10 times.

As expected, the single call version of ACDC outperforms the multiple call version, although for smaller layer sizes the gap is larger. When the layer size increases the multiple call version suffers significantly more from small per-call overheads. Both single and multiple call versions of ACDC perform significantly worse on non power-of-two layer sizes. This is because they rely on FFT operations, which are known to be more efficient when the input sizes are of lengths $z^n$, where $z$ is a small integer\footnote{\url{http://docs.nvidia.com/cuda/cufft/\#accuracy-and-performance}}. 

While the backward pass of ACDC is expected to take approximately the same time as the forward pass, it takes noticeably longer. To compute the parameter gradients one needs the input into the $\vD$ operation and the gradient of the output from the $\vA$ operation. As the aim of the layer is to reduce memory footprint it was decided instead to recompute these during the backward pass, increasing runtime while saving memory.

\section{Experiments}
\label{sec:experiments}

\subsection{Linear layers}

In this section we show that we are able to approximate linear operators using ACDC as predicted by the theory of Section~\ref{sec:deepsell}.  These experiments serve two purposes
\begin{enumerate}
\item They show that recovery of a dense linear operator by SGD is feasible in practice.  The theory of Section~\ref{sec:deepsell} guarantees only that it is possible to approximate any operator, but does not provide guidance on how to find this approximation.  Additionally, \cite{Huhtanen15} suggest that this is a difficult problem.
\item They validate empirically that our decision to focus on ACDC over the complex AFDF does not introduce obvious difficulties into the approximation.  The theory provides guarantees only for the complex case, and the experiments in this section suggest that restricting ourselves to real matrices is not a problem.
\end{enumerate}

We investigate using ACDC on a synthetic linear regression problem
\be 
 \vY = \vX \vW_{true} + \pmb{\epsilon},
\ee
where $\vX$ of size $10,000 \times 32$ and $\vW_{true}$ of size $32 \times 32$ are both constructed by sampling their entries uniformly at random in the unit interval.
Gaussian noise $ \pmb{\epsilon} \sim \mathcal{N} (0,10^{-4})$  is added to the generated targets.

The results of approximating the operator $\vW_{true}$ using $\operatorname{ACDC}_K$ for different values of $K$ are shown in Figure~\ref{fig:acdc_linear_train_loss}. 
The theory of Section 3 predicts that, in the complex case, for a $32\times 32$ matrix it should be sufficient to have 32 layers of ACDC to express an arbitrary $\vW_{true}$.

We found that initialization of the matrices $\vA$ and $\vD$ to identity $\vI$, with Gaussian noise $\mathcal{N}(0,10^{-2})$ added the diagonals in order to break symmetries, is essential for models having many ACDC layers. (We found the initialization to be robust to the specification of the noise added to the diagonals.) 

The need for thoughtful initialization is very clear in  Figure~\ref{fig:acdc_linear_train_loss}. With the right initialization (leftmost plot), the approximation results of Section 3 are confirmed, with improved accuracy as we increase the number of ACDC layers. However, if we use standard strategies for initializing linear layers (rightmost plot), we observe very poor optimization results as the number of ACDC layers increases.

This experiment suggests that fewer layers suffice to arrive at a reasonable approximation of the original $\vW_{true}$ than what the theory guarantees.  With neural networks in mind this is a very relevant observation.  It is well known that the linear layers of neural networks are compressible, indicating that we do not need to express an arbitrary linear operator in order to achieve good performance. Instead, we need only express a sufficiently interesting subset of matrices, and the result with 16 ACDC layers points to this being the case.

In Section~\ref{sec:convnet} we show that by interspersing nonlinearities between ACDC layers in a convolutional network it is possible to use dramatically fewer ACDC layers than the theory suggests are needed while still achieving good performance.

\begin{figure}[bt]
\centering
\includegraphics[width=0.495\linewidth]{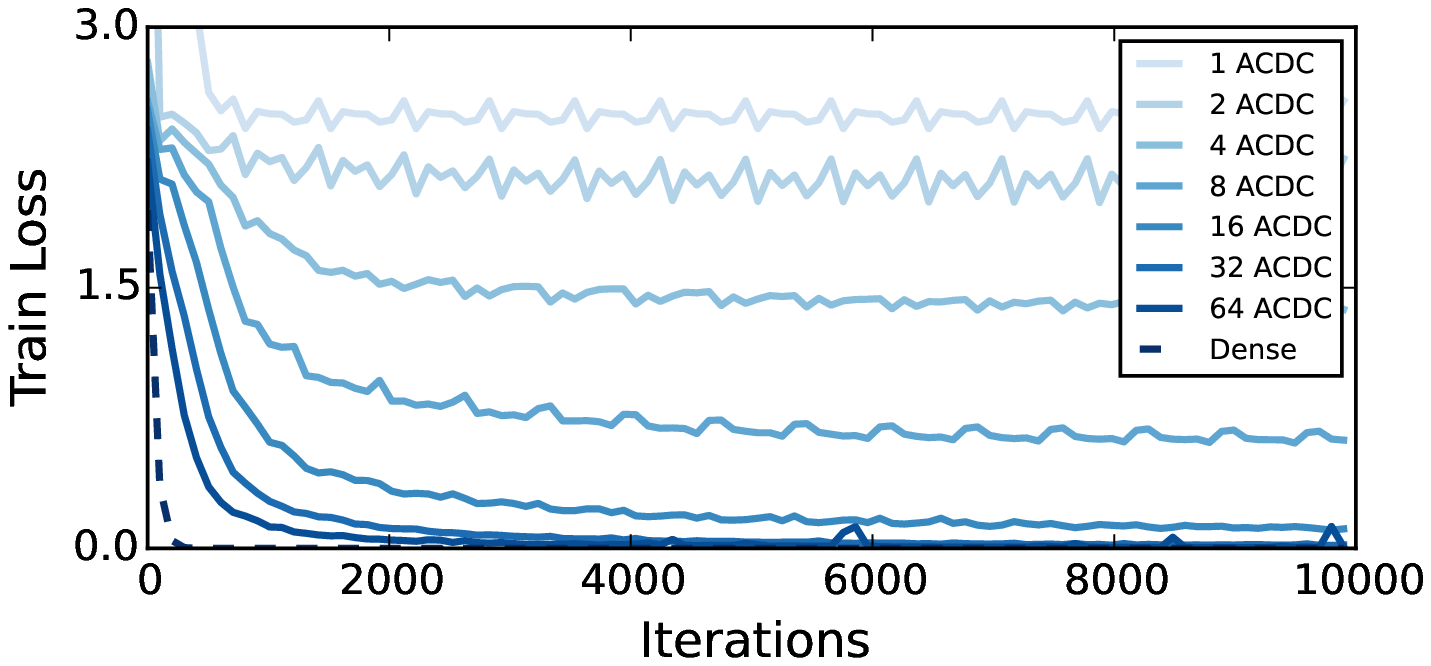}
\includegraphics[width=0.495\linewidth]{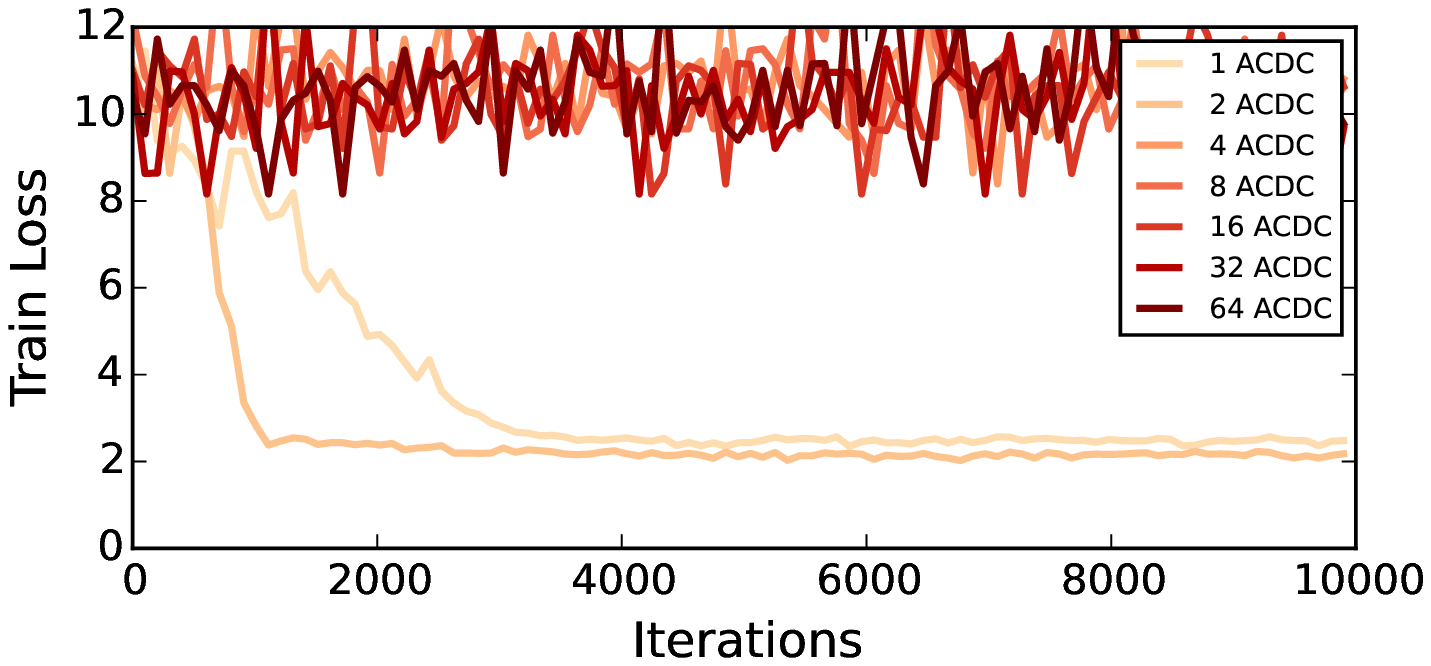}
\caption{Training loss for different number of ACDC layers compared to loss for the dense matrix. \textbf{Left:} Initialization: $\mathcal{N} (1,\sigma^{2})$ with $\sigma = 10^{-1}$. \textbf{Right:} Initialization: $\mathcal{N} (0,\sigma^{2})$ with $\sigma = 10^{-3}$.  Note the difference in scale on the y-axis.}
\label{fig:acdc_linear_train_loss}
\end{figure}

\subsection{Convolutional networks}
\label{sec:convnet}

In this section we investigate replacing the fully connected layers of a deep convolutional network with a cascade of ACDC layers.  In particular we use the CaffeNet architecture\footnote{\url{https://github.com/BVLC/caffe/tree/master/models/bvlc_reference_caffenet}} for ImageNet~\citep{imagenet_cvpr09}.  We target the two fully connected layers located between features extracted from the last convolutional layer
and the final logistic regression layer, which we replace with $12$ stacked ACDC transforms interleaved with ReLU non-linearities and permutations.  The permutations assure that adjacent SELLs are incoherent.

The model was trained using the SGD algorithm 
with learning rate $0.1$ multiplied by $0.1$ every $100{,}000$ iterations,
momentum $0.65$ and weight decay $0.0005$.  The output from the last convolutional layer was scaled by $0.1$, and the learning rates for each matrix $\vA$ and $\vD$ were multiplied by $24$ and $12$.
All diagonal matrices were initialized from $\mathcal{N} (1, 0.061)$ distribution.
No weight decay was applied to $\vA$ or $\vD$.
 Additive biases were added to the matrices $\vD$, but not to $\vA$, as this sufficed to provide the ACDC layer with a bias terms
just before the ReLU non-linearities. Biases were initialized to $0$.  To prevent the model from overfitting dropout regularization was placed 
before each of the last 5 SELL layers with dropout probability equal to $0.1$.

The resulting model arrives at $43.26\%$ error which is only $0.67\%$ worse 
when compared to the reference model, so
SELL confidently stays within $1\%$  of the performance of the original network.  We report this result, as well as a comparison to several other works in Table~\ref{tab:imagenet_comparison}.

\begin{table}[t]
  \begin{center}
  \begin{tabular}{l|r|r|r|r|r}
    Test Time Post-Processing & Top-1 Err Increase & \# of Param & Reduction \\
    \hline
  \textit{\cite{Collins2014}} & \textit{1.81\%} & \textit{15.2M} & \textit{x4.0} \\
  \cite{Han2015} & 0.00\% & 6.7M & x9 \\
  \cite{DBLP:journals/corr/HanMD15} (P+Q) & 0.00\% & $\sim$2.3M & ~ x27\\
  \hline~\\
  Train and Test Time Reduction \\
  \hline
  \cite{cheng2015exploration} (Circulant CNN 2) & 0.40\% & $>$ 16.3M & $<$ x3.8 \\
  $^*$\cite{novikov2015tensorizing} (TT4 FC FC) & 0.30\% &  - & x3.9 \\ 
  $^*$\textit{\cite{novikov2015tensorizing} (TT4 TT4 FC)} & \textit{1.30\%} &  - & \textit{x7.4} \\ 
  \cite{Yang2014} (Finetuned SVD 1) & 0.14\% & 46.6M & x1.3 \\
  \textit{\cite{Yang2014} (Finetuned SVD 2)} & \textit{1.22\%} & \textit{23.4M} & \textit{x2.0} \\
  \cite{Yang2014} (Adaptive Fastfood 16) & 0.30\% &  16.4M & x3.6 \\ 
  \textbf{ACDC} & \textbf{0.67\%} & \textbf{9.7M} & \textbf{x6.0} \\
  \hline
  CaffeNet Reference Model & 0.00\% & 58.7M & x1.0 \\ 
\end{tabular}
\end{center}
\caption{ Comparison of SELL with alternative factorization methods achieving marginal performance drop on the \mbox{ImageNet} dataset. Entries in italics incur an increase in top-1 error of $>$1.0\%.  Entries marked with a star use VGG16, which makes them not directly comparable to our own.  Previous works have shown that it is typically possible to achieve $\sim 30\%$ greater compression factors on VGG16 than on AlexNet-style architectures \citep{DBLP:journals/corr/HanMD15,Han2015}.}
\label{tab:imagenet_comparison}
\end{table}

\begin{figure}
\centering
\includegraphics[width=0.65\linewidth]{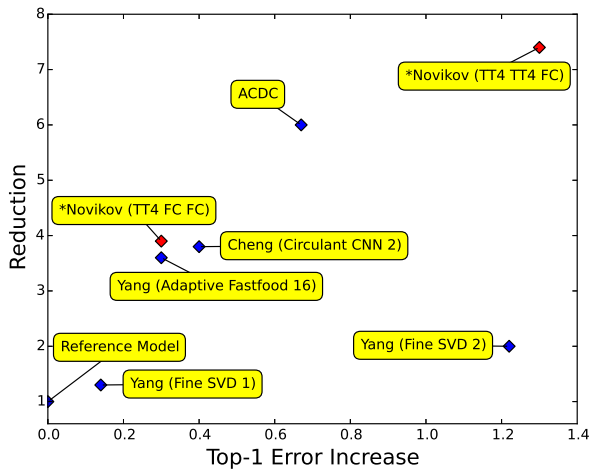}
\caption{Visual comparison of the tradeoff between parameter and accuracy reduction for train time applicable SELLs. Red entries (marked with a star in the labels) use VGG16, which makes them not directly comparable to the others, as discussed in the caption of Table~\ref{tab:imagenet_comparison}.}
\end{figure}

The two fully connected layers of CaffeNet, consisting of more than $41$ million parameters, are replaced
with SELL modules which contain a combined $165,888$ parameters. These results agree with the hypothesis that
neural networks are over-parameterized formulated by 
\cite{DenilSDRF13} and supported by \cite{Yang2014}. At the same time such a tremendous
reduction without significant loss of accuracy suggests that SELL is a powerful 
concept and a way to use parameters efficiently.

This approach is an improvement over Deep Fried Convnets~\citep{Yang2014} 
and other FastFood~\citep{le:2013} based
transforms in the sense that the layers remain narrow and become deep
(potentially interleaved with non-linearites) as opposed to
wide and shallow, while maintaining comparable or better performance.  The result of narrower layers is that the final softmax classification layer requires substantially fewer parameters, meaning that the resulting compression ratio is higher.

Our experiment shows that ACDC transforms are an attractive building block 
for feedforward convolutional architectures, that can be used as a structured alternative to fully connected layers, while fitting very well into the deep learning philosophy of introducing transformations executed
in steps as the signal is propagated down the network rather than projecting to higher-dimensional
spaces.


%

It should be noted that the method of pruning proposed in \citep{Han2015} and the follow-up method 
of pruning, quantizing and Huffman coding proposed in \citep{DBLP:journals/corr/HanMD15} achieve 
compression rates between x9 and x27 on AlexNet\footnote{\cite{DBLP:journals/corr/HanMD15} report x35 compression by using Huffman coding and counting bytes. We report the number of parameters here for consistency.} by applying a pipeline of reducing operations on a trained
models. Usually it is necessary to perform at least a few iterations of such reductions to arrive at the stated compression rates. For the AlexNet model one such iteration takes 173 hours according to \citep{Han2015}.
On top of that as this method requires training the original full model the time cost of that operation should be taken into consideration as well.

Compressing pipelines target models that are ready for deployment and function in the environment where 
amount of time spent on training is absolutely dominated by the time spent evaluating predictions.  In contrast, SELL methods are appropriate for incorporation into the design of a model.

\section{Conclusion}

We introduced a new Structured Efficient Linear Layer, which adds to a growing literature on using memory efficient structured transforms as efficient replacements for the dense matrices in the fully connected layers of neural networks. The structure of our SELL is motivated by matrix approximation results from Fourier optics, but has been specialized for efficient implementation on NVIDIA GPUs.

We have shown that proper initialization of our SELL allows us to build very deep cascades of SELLs that can be optimized using SGD.  Proper initialization is simple, but is essential for training cascades of SELLs with more than a few layers.  Working with deep and narrow cascades of SELLs makes our networks more parameter efficient than previous works using shallow and wide cascades because the cost of layers interfacing between the SELL and the rest of the network is reduced (e.g.\ the size of the input to the dense logistic regression layer of the network is much smaller).

In future work we plan to investigate replacing the diagonal layers of ACDC with other efficient structured matrices such as band or block diagonals.  These alternatives introduce additional parameters in each layer, but may give us the opportunity to explore the continuum between depth and expressive power per layer more precisely.

Another interesting avenue of investigation is to include SELL layers in other neural network models such as RNNs or LSTMs.  Recurrent nets are a particularly attractive targets as they are typically composed entirely of linear layers.  This means that the potential parameter savings are quite substantial, and since the computational bottleneck is in these models comes from matrix-matrix multiplications there is a potential speed advantage as well.



{\small
\bibliographystyle{iclr2016_conference}
\bibliography{deepbib}
}

\end{document}